\newtheorem{fact}{Fact}
\newtheorem{definition}{Definition}
\begin{document}

\title{On the Product Rule for Classification Problems}

\author{\authorname{Marcelo Cicconet}
\affiliation{New York University}
\email{cicconet@gmail.com}
}

\keywords{Supervised Learning; Classification; Product Rule.}

\abstract{We discuss theoretical aspects of the product rule for classification problems
in supervised machine learning for the case of combining classifiers.
We show that
(1)~the product  rule arises from the MAP classifier
supposing equivalent priors and conditional independence given a class;
(2)~under some conditions, the product rule is equivalent to minimizing the sum
of the squared distances to the respective centers of the classes related with
different features, such distances being weighted by the spread of the classes;
(3)~observing some hypothesis, the product rule is equivalent to concatenating the vectors of features.}

\onecolumn \maketitle \normalsize \vfill

\section{Introduction}\label{sec:introduction}
\vspace{-0.25cm}
With the advance of the Machine Learning field, and the discovery
of many different techniques,
the subject of \emph{combining multiple learners} \cite{Alpaydin2004} eventually
drove attention, in particular the problem of \emph{combining classifiers}.
Many different methods appeared, and soon they
were compared in terms of efficiency in solving problems.

The \emph{product rule} has been present in some of these works
(e.g., \cite{Alexandre2001,Kittler1998,Breukelen1998,Duin2000,Cicconet2010,Cicconet2010B,Li07}),
in contexts ranging from the accuracy of the different combination rules
to some analytical properties of the different methods.

In \cite{Breukelen1998} it was shown that,
in the context of handwritten digit recognition, the product rule performs better
for combining linear classifiers.
In general, however, the product rule does not stand out from competitors \cite{Duin2000}.
For the problem of combining audio and video signals in guitar-chord recognition,
the product rule is better then the sum rule \cite{Cicconet2010},
but on the problem of identity verification using face and voice profiles, the
sum rule wins \cite{Kittler1998}.

On the theoretical realm, \cite{Alexandre2001} shows that
for problems with two classes, the sum and product rules are equivalent
when using two classifiers and the sum of the estimates of the a posteriori probabilities
is equal to one. In \cite{Kittler1998}, the product rule is derived from the hypothesis of
conditional statistical independence between different representations of the data.
There are also some intuitive explanations for the choice of the product rule,
as for instance the fact that the product (``END'' operator) is preferred with respect to the
sum rule (``OR'' operator) because it enforces all qualities defined by the measures at once
\cite{Mertens2007}.

In this text, analytical properties of the product rule are further analyzed,
in the contexts of two or more classifiers.
We show that
(1)~the product  rule arises from the MAP classifier
supposing equivalent priors and conditional independence given a class;
(2)~under some conditions, the product rule is equivalent to minimizing the sum
of the squared distances to the respective centers of the classes related with
different features, such distances being weighted by the spread of the classes;
(3)~observing some hypothesis, the product rule is equivalent to concatenating the vectors of features. 

Our work extends the current theoretical understanding of the product rule provided by Alexandre \emph{et al} \cite{Alexandre2001}
and Kittler \emph{et al} \cite{Kittler1998}, as it was made in the direction of the sum rule by Li and Zong \cite{Li07}.


\section{Theoretical Facts}\label{sec:tf}
\vspace{-0.25cm}
\begin{definition}\label{def:pr}
Let $X,Y$ be (continuous) random variables corresponding to $2$ distinct feature vectors, and
$C$ the (discrete) random variable corresponding to the class, whose output can be $c_1,...,c_K$.
For any $Z \in \{X,Y\}$ and $k \in \{1,\ldots,K\}$, let
$p_{Z,k}$ be a function that outputs the \emph{confidence} that the class
is $c_k$ considering that the features-variable is $Z$.
Supposing that the features are $X = x$ and $Y = y$,
the \emph{product rule} for classification will assign $C = c_{\hat{k}}$ provided
\begin{equation*}
p_{X,\hat{k}}(x) \cdot p_{Y,\hat{k}}(y) = \max_{k = 1,...,K}p_{X,k}(x) \cdot p_{Y,k}(y)\text{ .}
\end{equation*}
\end{definition}

In this definition and in the following results we are using, for simplicity,
only two random variables, named $X$ and $Y$.
We could have used, instead, a set of $N$ random variables, say $X^1,...,X^N$,
but that would unnecessarily overload the notation.

\begin{definition}
Let $(X,Y)$ be the random variable obtained by concatenating
the features $X$ and $Y$, and
$p(\cdot| C = c_k)$
the density function for the variable $(X,Y)$ conditioned to $C = c_k$.
We will denote the value of this function at the point
$(x,y)$ by $p(X = x, Y = y | C = c_k)$.
Let $P(C = c_k)$ be the \emph{prior} probability that
the class is $C = c_k$.

Finally, let us define $p_{(X,Y),k}(x,y)$ as follows:
\begin{equation*}
p_{(X,Y),k}(x,y) = p(X=x,Y=y | C = c_k) \cdot P(C = c_k)\text{ .}
\end{equation*}

Given a sampled value $(X,Y) = (x,y)$,
the \emph{MAP} (Maximum a Posteriori) classifier will assign $C = c_{\hat{k}}$ provided
\begin{equation*}
p_{(X,Y),\hat{k}}(x,y) = \max_{k = 1,...,K}p_{(X,Y),k}(x,y)
\end{equation*}

\end{definition}

\begin{fact}\label{fact:map}
When using the MAP classifier,
the product rule arises under the hypothesis of (1)
conditional independency given the class and (2)
same prior probability for the classes.
\end{fact}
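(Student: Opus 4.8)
The plan is to unwind both decision rules into explicit $\arg\max$ problems over $k$ and show that, once hypotheses (1) and (2) are imposed, they become the same problem. The first thing I would pin down is the meaning of the confidence functions $p_{Z,k}$, since the comparison only makes sense once these are tied to the probabilistic model: the natural choice, mirroring the definition of $p_{(X,Y),k}$, is the single-feature unnormalized posterior $p_{Z,k}(z) = p(Z = z \mid C = c_k)\,P(C = c_k)$ for $Z \in \{X,Y\}$. With that identification, the product rule selects $\hat k$ maximizing
\[
p_{X,k}(x)\,p_{Y,k}(y) = p(X = x \mid C = c_k)\,p(Y = y \mid C = c_k)\,P(C = c_k)^2 .
\]

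Next I would bring in hypothesis (1). Conditional independence of $X$ and $Y$ given $C$ means $p(X = x, Y = y \mid C = c_k) = p(X = x \mid C = c_k)\,p(Y = y \mid C = c_k)$, so the MAP quantity becomes $p_{(X,Y),k}(x,y) = p(X = x \mid C = c_k)\,p(Y = y \mid C = c_k)\,P(C = c_k)$. Hypothesis (2), $P(C = c_k) = 1/K$ for every $k$, then lets me replace the prior factors by constants: the product-rule expression is $K^{-2}\,p(X = x \mid C = c_k)\,p(Y = y \mid C = c_k)$ and the MAP expression is $K^{-1}\,p(X = x \mid C = c_k)\,p(Y = y \mid C = c_k)$. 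Both are positive multiples — by a factor independent of $k$ — of $p(X = x \mid C = c_k)\,p(Y = y \mid C = c_k)$, hence attain their maximum at the same index $\hat k$, which is exactly the claim that the MAP classifier reduces to the product rule.

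I do not expect a serious obstacle here; the work is bookkeeping, and the one point that deserves care is making explicit that every discarded factor (the $K^{-1}$ or $K^{-2}$, and, if one instead models the confidences as genuine posteriors $P(C = c_k \mid Z = z)$ via Bayes' theorem, also the $k$-independent marginals $p(X = x)$ and $p(Y = y)$) is strictly positive and constant in $k$, so it cannot change the $\arg\max$; this also shows the conclusion is robust to the mild imprecision in the definition of ``confidence''. I would close by observing that replacing the two features by $N$ features $X^1,\dots,X^N$ changes nothing: conditional independence factors the joint conditional density into $N$ one-feature terms, and equal priors still contribute only a global $k$-independent constant, so the same cancellation goes through.
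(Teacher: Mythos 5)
Your proposal is correct and follows essentially the same route as the paper: factor the joint class-conditional density using conditional independence, observe that the equal priors contribute only a $k$-independent positive constant, and conclude that the MAP $\arg\max$ coincides with the product-rule $\arg\max$. The only cosmetic difference is that you take the confidences $p_{Z,k}$ to be the unnormalized posteriors $p(Z=z\mid C=c_k)P(C=c_k)$, whereas the paper identifies them directly with the likelihoods $p(Z=z\mid C=c_k)$; under equal priors these differ by a constant factor, so both identifications yield the same conclusion.
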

\begin{proof}
The MAP classifier is given by
\begin{equation*}
p(X = x, Y = y | C = c_k) \cdot P(C = c_k)\text{ .}
\end{equation*}
\noindent
Now hypothesis 1 means
\begin{eqnarray*}
&p(X = x, Y = y | C = c_k) =& \\
&= p(X = x | C = c_k)\cdot p(Y = y | C = c_k)\text{ ,}&
\end{eqnarray*}
\noindent
and hypothesis 2 implies that $P(C = c_{\tilde{k}}) = P(C = c_{\hat{k}})$ for all $\tilde{k},\hat{k} = 1,...,K$.
Therefore
\begin{eqnarray*}
&\max_{k = 1,...,K}p_{(X,Y),k}(x,y) =&\\
&= \max_{k = 1,...,K}p(X = x | C = c_k)\cdot p(Y = y | C = c_k)\text{ ,}&
\end{eqnarray*}
which is the product rule (see definition~\ref{def:pr}) for
$p_{X,k}(x) = p(X = x | C = c_k)$ and $p_{Y,k}(y) = p(Y = y | C = c_k)$.

\end{proof}

\begin{fact}
For each $Z \in \{X,Y\}$,
let $d_Z$ be the (finite) dimension of the variable $Z$,
$I_{d_Z}$ the identity matrix of dimensions $d_Z \times d_Z$,
and $\Sigma_{Z,k} = \sigma_{Z,k}^2I_{d_Z}$ (where $\sigma_{Z,k}$ is positive number).
Also, for each $k=1,\ldots,K$, let $\mu_{Z,k}$ be fixed points in $\mathbb{R}^{d_Z}$.

Defining confidence functions (see definition~\ref{def:pr})
\begin{eqnarray}
p_{X,k}(x) = e^{-\frac{1}{2}(x-\mu_{X,k})^\top\Sigma_{X,k}^{-1}(x-\mu_{X,k})}\text{ , and} \\
p_{Y,k}(y) = e^{-\frac{1}{2}(y-\mu_{Y,k})^\top\Sigma_{Y,k}^{-1}(y-\mu_{Y,k})}\text{ ,}
\end{eqnarray}

\noindent
the product rule is equivalent to
\begin{equation*}
\min_{k = 1,...,K} {\frac{1}{\sigma_{X,k}^2}\|x-\mu_{X,k}\|^2
		+\frac{1}{\sigma_{Y,k}^2}\|y-\mu_{Y,k}\|^2}\text{ .}
\label{eq:prod_rule}
\end{equation*}
\noindent
That is, supposing gaussian-like classifiers with covariances parallel to the axis,
the product rule tries to minimize the sum of the squared distances
to the respective ``centers'' of classes for $X$ and $Y$,
such distances being weighted by the inverse of the ``spread'' of the
the classes (an intuitively reasonable strategy, in fact).
\end{fact}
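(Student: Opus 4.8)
\emph{Proof strategy.} The plan is to reduce the product rule to the minimization in the statement by purely algebraic manipulation of the exponents, exploiting that the spherical covariances turn each quadratic form into a scalar multiple of a squared Euclidean norm, and that the exponential is strictly monotone, so that it preserves the location of the optimizing index.

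First I would form the product of the two confidence functions: since both are exponentials,
\[
p_{X,k}(x)\cdot p_{Y,k}(y) = \exp\!\Big(-\tfrac12\big[(x-\mu_{X,k})^\top\Sigma_{X,k}^{-1}(x-\mu_{X,k}) + (y-\mu_{Y,k})^\top\Sigma_{Y,k}^{-1}(y-\mu_{Y,k})\big]\Big)\text{ .}
\]
Next, using $\Sigma_{Z,k} = \sigma_{Z,k}^2 I_{d_Z}$, hence $\Sigma_{Z,k}^{-1} = \sigma_{Z,k}^{-2} I_{d_Z}$, together with the identity $v^\top(cI)v = c\,\|v\|^2$, each quadratic form collapses to $\sigma_{Z,k}^{-2}\|z-\mu_{Z,k}\|^2$. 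Thus the exponent above equals $-\tfrac12\big(\sigma_{X,k}^{-2}\|x-\mu_{X,k}\|^2 + \sigma_{Y,k}^{-2}\|y-\mu_{Y,k}\|^2\big)$, i.e. $-\tfrac12$ times the bracketed expression appearing in the claimed minimization.

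Finally I would invoke monotonicity: the map $t\mapsto e^{-t/2}$ is strictly decreasing on $\mathbb{R}$, so for any finite family $t_1,\dots,t_K$ of reals, the index maximizing $e^{-t_k/2}$ over $k$ is exactly the index minimizing $t_k$ over $k$ (and uniqueness of one transfers to the other). Applying this with $t_k = \sigma_{X,k}^{-2}\|x-\mu_{X,k}\|^2 + \sigma_{Y,k}^{-2}\|y-\mu_{Y,k}\|^2$ shows that the index $\hat k$ selected by the product rule of Definition~\ref{def:pr} coincides with the minimizer of the displayed expression, which is the asserted equivalence.

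There is essentially no hard step here; the only point deserving care is that the $p_{Z,k}$ are taken to be \emph{unnormalized} Gaussian bumps, so that no $k$-dependent normalizing constants $(2\pi)^{-d_Z/2}(\det\Sigma_{Z,k})^{-1/2}$ enter the comparison. Had we instead used genuine probability densities, an extra additive term $\tfrac{d_X}{2}\log 2\pi + d_X\log\sigma_{X,k} + \tfrac{d_Y}{2}\log 2\pi + d_Y\log\sigma_{Y,k}$ would appear inside the minimization and the clean equivalence would break; I would flag this caveat explicitly rather than treat it as an obstacle.
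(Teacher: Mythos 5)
Your proof is correct and follows essentially the same route as the paper: expand the product into a single exponential, use $\Sigma_{Z,k}^{-1}=\sigma_{Z,k}^{-2}I_{d_Z}$ to reduce the quadratic forms to weighted squared norms, and pass from $\max$ to $\min$ by monotonicity (the paper phrases this as applying $\log$ and multiplying by $2$, which is the same step). Your closing caveat about $k$-dependent normalizing constants is a sensible observation, not a gap, and is consistent with the paper treating the normalized case separately in Fact~\ref{fact:concatenation}.
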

\begin{proof}
Under the mentioned hypothesis, we have
\begin{eqnarray*}
&\max_{k = 1,...,K}p_{X,k}(x) \cdot p_{Y,k}(y) =&\\
&= \max_{k = 1,...,K} e^{-\left(\frac{1}{2\sigma_{X,k}^2}\|x-\mu_{X,k}\|^2
		+\frac{1}{2\sigma_{Y,k}^2}\|y-\mu_{Y,k}\|^2\right)}\text{ .}&
\end{eqnarray*}
Applying $\log$ and multiplying by $2$ the second member of the above equality results in
\begin{eqnarray*}
&\max_{k = 1,...,K}p_{X,k}(x) \cdot p_{Y,k}(y) =&\\
&= \min_{k = 1,...,K} {\frac{1}{\sigma_{X,k}^2}\|x-\mu_{X,k}\|^2
		+\frac{1}{\sigma_{Y,k}^2}\|y-\mu_{Y,k}\|^2}\text{ .}&
\end{eqnarray*}
\end{proof}

\begin{fact}\label{fact:concatenation}
Let us now define confidence functions as follows:
\begin{equation*}
p_{X,k}(x) = \frac{1}{(2\pi)^{d_X}|\Sigma_{X,k}|^{1/2}}e^{-\frac{1}{2}(x-\mu_{X,k})^\top\Sigma_{X,k}^{-1}(x-\mu_{X,k})}\text{ , and} \label{eq:conc:1}
\end{equation*}
\begin{equation*}
p_{Y,k}(y) = \frac{1}{(2\pi)^{d_Y}|\Sigma_{Y,k}|^{1/2}}e^{-\frac{1}{2}(y-\mu_{Y,k})^\top\Sigma_{Y,k}^{-1}(y-\mu_{Y,k})}\text{ ,}\label{eq:conc:2}
\end{equation*}
where, for each $Z\in \{X,Y\}$,
$|\Sigma_{Z,k}|$ is the determinant of $\Sigma_{Z,k}$.
Let us suppose also that, conditioned to the class $c_j$, $X$ and $Y$ are uncorrelated, that is,
being $\Sigma_k$ the covariance of $(X,Y)|C = c_k$, we can write
\begin{equation*}
	\Sigma_k =
	\left[
		\begin{array}{cc}
			\Sigma_{X,k} & 0 \\
			0 & \Sigma_{Y,k}
		\end{array}
		\right] \text{ ,}
\end{equation*}
where, for each $Z \in \{X,Y\}$, $\Sigma_{Z,k}$ is the covariance of $Z | C = c_k$.
Then, putting $\mu_j = (\mu_{X,j}, \mu_{Y,j})$, we have
\begin{eqnarray*}
&p_{X,k}(x)\cdot p_{Y,k}(y) =&\\
&= \frac{1}{(2\pi)^{d_X+d_Y}|\Sigma_k|^{1/2}}
	e^{-\frac{1}{2}((x,y)-\mu_k)^\top\Sigma_j^{-1}((x,y)-\mu_k)} \text{ .}&
\end{eqnarray*}
That is, supposing gaussian classifiers, the product rule is equivalent to learning
using the concatenated vectors of features.
\end{fact}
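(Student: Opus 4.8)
The plan is to verify the identity by a direct computation, exploiting the block-diagonal structure of $\Sigma_k$. First I would expand the product $p_{X,k}(x)\cdot p_{Y,k}(y)$ by simply multiplying the two defining expressions. The normalizing constants multiply to give $\frac{1}{(2\pi)^{d_X+d_Y}|\Sigma_{X,k}|^{1/2}|\Sigma_{Y,k}|^{1/2}}$, and the exponents add, so the exponential term becomes $e^{-\frac{1}{2}\left[(x-\mu_{X,k})^\top\Sigma_{X,k}^{-1}(x-\mu_{X,k}) + (y-\mu_{Y,k})^\top\Sigma_{Y,k}^{-1}(y-\mu_{Y,k})\right]}$.

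Next I would match each factor against the claimed right-hand side using two elementary facts about block-diagonal matrices. The determinant of a block-diagonal matrix factors, so $|\Sigma_k| = |\Sigma_{X,k}|\cdot|\Sigma_{Y,k}|$, which turns the product of normalizing constants into $\frac{1}{(2\pi)^{d_X+d_Y}|\Sigma_k|^{1/2}}$. Likewise the inverse of a block-diagonal matrix is the block-diagonal matrix of the inverses, so $\Sigma_k^{-1} = \mathrm{diag}(\Sigma_{X,k}^{-1},\Sigma_{Y,k}^{-1})$. Writing $(x,y)-\mu_k = ((x-\mu_{X,k}),(y-\mu_{Y,k}))$ and carrying out the block quadratic form, the cross terms vanish because of the zero off-diagonal blocks, leaving exactly the sum $(x-\mu_{X,k})^\top\Sigma_{X,k}^{-1}(x-\mu_{X,k}) + (y-\mu_{Y,k})^\top\Sigma_{Y,k}^{-1}(y-\mu_{Y,k})$ that appeared in the exponent above. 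Assembling the pieces yields the stated formula.

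There is no real obstacle here; the only point requiring a word of care is the index mismatch in the statement as written, where $\Sigma_j$ and $\mu_k$ appear together — I would simply note that these refer to the same class index $k$ (so the expression should read $((x,y)-\mu_k)^\top\Sigma_k^{-1}((x,y)-\mu_k)$) and proceed. Finally I would remark that the resulting expression is precisely the density of a Gaussian with mean $\mu_k$ and covariance $\Sigma_k$ evaluated at $(x,y)$, i.e. $p_{(X,Y),k}(x,y)$ in the sense of the earlier definition (up to the prior factor, which is constant across classes under the earlier hypotheses), so maximizing the product $p_{X,k}(x)\cdot p_{Y,k}(y)$ over $k$ is the same as running the MAP/Gaussian classifier on the concatenated feature vector $(x,y)$, which is the asserted equivalence.
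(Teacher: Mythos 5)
Your proof is correct and follows essentially the same route as the paper's: multiply the two densities, use that the inverse of the block-diagonal $\Sigma_k$ is the block-diagonal of the inverses so the two quadratic forms combine into the concatenated one, and that $|\Sigma_k| = |\Sigma_{X,k}|\,|\Sigma_{Y,k}|$ to merge the normalizing constants. Your side remark correctly identifies the $\Sigma_j$/$\mu_j$ versus $k$ index slip in the statement as a typo.
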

\begin{proof}
The inverse of $\Sigma_k$ is
\begin{equation*}
	\Sigma_k^{-1} =
	\left[
		\begin{array}{cc}
			\Sigma_{X,k}^{-1} & 0 \\
			0 & \Sigma_{Y,k}^{-1}
		\end{array}
		\right] \text{ .}
\end{equation*}
This way, the expression
\begin{equation*}
	(x-\mu_{X,k})^\top\Sigma_{X,k}^{-1}(x-\mu_{X,k})+(y-\mu_{Y,k})^\top\Sigma_{Y,k}^{-1}(y-\mu_{Y,k})
\end{equation*}
reduces to
\begin{equation*}
	((x,y)-\mu_k)^\top\Sigma_k^{-1}((x,y)-\mu_k) \text{ .}
\end{equation*}
Now
\begin{equation*}
	\frac{1}{(2\pi)^{d_X}|\Sigma_{X,k}|^{1/2}} \cdot \frac{1}{(2\pi)^{d_Y}|\Sigma_{Y,k}|^{1/2}}
	=
	\frac{1}{(2\pi)^{d_X+d_Y}|\Sigma_k|^{1/2}} \text{ .}
\end{equation*}
Therefore
\begin{eqnarray*}
	&p_{X,k}(x) \cdot p_{Y,k}(y) =&\\
	&= \frac{1}{(2\pi)^{d_X+d_Y}|\Sigma_k|^{1/2}}
	e^{-\frac{1}{2}((x,y)-\mu_k)^\top\Sigma_k^{-1}((x,y)-\mu_k)} \text{ .}&
\end{eqnarray*}
\end{proof}

\section{Discussion}

According to Fact~\ref{fact:map}, the product  rule arises when maximizing
the posterior under the hypothesis of
equivalent priors and conditional independence given a class.
We have just seen (Fact~\ref{fact:concatenation}) that, supposing only uncorrelation
(which is less then independency), the product rule appears
as well. But in fact we have used gaussian classifiers,
i.e., we supposed the data was normally distributed.
This is in accordance with the fact that
normality and uncorrelation implies independency.

An important consequence of Fact~\ref{fact:concatenation} has to do with the \emph{curse of dimensionality}.
If there is strong evidence that the conditional joint distribution of $(X, Y)$ given any class $C = c_k$
is well approximated by a normal distribution, and that $X|C = c_k$ and $Y |C = c_k$ are uncorrelated, than the product rule is an interesting option, because we do not have to deal with a feature vector with dimension larger the largest of the dimensions of the original descriptors. Besides, the product rule allows parallelization.

%

\vfill

\begin{thebibliography}{9}
\vspace{-0.2cm}

\bibitem{Alexandre2001}
L. Alexandre, A. Campilho and M. Kamel.
\emph{On Combining Classifiers Using Sum and Product Rules}.
Pat. Rec. Letters 22. P. 1283-1289. 2001. 

\bibitem{Alpaydin2004}
E. Alpaydin.
\emph{Introduction to Machine Learning}. The MIT Press, Cambridge, MA, 2004.

\bibitem{Breukelen1998}
M. van Breukelen, R. Duin, D. Tax and J. Hartog.
\emph{Handwritten Digit Recognition by Combined Classifiers}.
Kybernetica, Vol. 34, Number 4, P. 381-386. 1998.

\bibitem{Cicconet2010B}
M. Cicconet.
\emph{The Guitar as a Human-Computer Interface}.
D.Sc. Thesis.
National Institute of Pure and Applied Mathematics. Rio de Janeiro, 2010.

\bibitem{Cicconet2010}
M. Cicconet, P. Carvalho and L. Velho.
\emph{On Bimodal Guitar-Chord Recognition}.
International Computer Music Conference. New York, 2010.

\bibitem{Duin2000}
R. Duin and D. Tax.
\emph{Experiments with Classifier Combining Rules}.
1st Int. Workshop on Multiple Classifier Systems. P. 16-29.
London, UK. 2000.

\bibitem{Kittler1998}
J. Kittler, M. Hatef, R. Duin and J. Matas.
\emph{On Combining Classifiers}.
IEEE TPAMI, Vol. 20, N. 3, March 1998.

\bibitem{Li07}
S. Li and C. Zong.
\emph{Classifier Combining Rules Under Independence Assumptions}.
7th International Conference on Multiple Classifier Systems.
Springer-Verlag. Berlin Heidelberg. 2007.

\bibitem{Mertens2007}
T. Mertens, J. Kautz and F. Van Reeth. \emph{Exposure Fusion}.
15th Pacific Conference on Computer Graphics and Applications.
P. 382-390. Washington, DC, USA. 2007.

\end{thebibliography}
\end{document}